\documentclass[sigconf,nonacm]{acmart}

\usepackage{algorithm}
\usepackage{algorithmic}
\usepackage{graphicx}
\usepackage{enumitem}
\usepackage{verbatim} 

\newtheorem{theorem}{Theorem}
\newtheorem{lemma}[theorem]{Lemma}

\newtheorem{condition}{Condition}
\newtheorem*{lemma*}{Lemma}
\newtheorem*{theorem*}{Theorem}

\DeclareMathOperator*{\argmin}{arg\,min}

\acmConference[OARS @RecSys'26]{the 20th ACM Conference on Recommender Systems}{September 27 -- October 2, 2026}{Minneapolis, MN, USA}
\acmBooktitle{Proceedings of the 20th ACM Conference on Recommender Systems (OARS @RecSys'26), September 27--October 2, 2026, Minneapolis, MN, USA}

\begin{document}

\title[BAFF: Bid-Aware Filter Family]{BAFF: Bid-Aware Filter Family for Mitigating Training Data Interference in RTB A/B Tests}

\author{Jeonglyul Oh}
\authornote{These authors contributed equally to this research.}
\email{jeonglyul@dable.io}
\affiliation{
    \institution{Dable Inc.}
    \city{Seoul}
    \country{Republic of Korea}
}

\author{Ikkyu Choi}
\authornotemark[1]
\email{ikkyu@dable.io}
\affiliation{
    \institution{Dable Inc.}
    \city{Seoul}
    \country{Republic of Korea}
}

\author{Inseop Youn}
\authornotemark[1]
\email{inseop@dable.io}
\affiliation{
    \institution{Dable Inc.}
    \city{Seoul}
    \country{Republic of Korea}
}

\author{Youngjae Kim}
\email{youngjae@dable.io}
\affiliation{
    \institution{Dable Inc.}
    \city{Seoul}
    \country{Republic of Korea}
}

\begin{abstract}
In online A/B tests for real-time bidding (RTB), control and treatment models are typically trained on a shared serving log that includes data generated by the counterpart model. This shared-log training biases each model's training data through two channels: the counterpart model may have selected a different ad from the ad-candidate pool (ad-ranking disagreement) and may have bid a different price (bid-pricing disagreement), potentially distorting the A/B test outcome. Log-splitting eliminates the bias but sacrifices training data; log-sharing retains all data but leaves the bias unaddressed. We formalize the \textbf{Bid-Aware Filter Family (BAFF)}, a class of $(k,l)$-parameterized hard filters that controls tolerance to each channel independently, providing a structured search space between these two extremes.
We further propose a three-stage online measurement protocol that enables evaluating data-sharing strategies by their deviation from an interference-free reference model in production. In offline simulation, a $(k,l)$ sweep surfaces operating points with smaller deviation from the interference-free reference model than both log-sharing and log-splitting. In a live RTB deployment on a demand-side platform (DSP), filter-based variants preserve the reference model's business metrics (e.g., CPC, CTR) more closely than both baselines. The best operating point is setting-dependent, underscoring the practical value of the search space itself.

\end{abstract}

\begin{CCSXML}
<ccs2012>
<concept>
<concept_id>10002951.10003260.10003272</concept_id>
<concept_desc>Information systems~Online advertising</concept_desc>
<concept_significance>500</concept_significance>
</concept>
</ccs2012>
\end{CCSXML}
\ccsdesc[500]{Information systems~Online advertising}

\keywords{Recommendation System, A/B testing, SUTVA, Real Time Bidding System}

\maketitle

\section{Introduction}
Standard practice in online A/B testing is to train both control and treatment models on the full pooled serving log, including entries generated by the counterpart model~\cite{brennan2025symbiosis}. However, sharing logs biases each model's training dataset when the two models have discrepancies in their predictions. Brennan et al.~\cite{brennan2025symbiosis} call this \emph{symbiosis bias} and frame it as a SUTVA violation caused by sharing training data. Si~\cite{si2023interference} studies a related phenomenon under the name \emph{interference induced by data training loops}, and Zheng and Zhao~\cite{zheng2025adaptation} report \emph{algorithm adaptation} effects in production recommenders. These previous works establish the bias as a general property of shared-log training and propose experimental design remedies. Left unaddressed, the bias can distort an A/B test's winner--loser comparison, causing operators to discard a superior model or deploy an inferior one.

We focus on RTB, where a demand-side platform (DSP) receives a bid request, selects an ad from its candidate pool, and submits a bid---both decisions determined by the model's score estimates (e.g.\ pCTR). Since only the top-ranked ad is served and only the winning bid produces a log entry, any discrepancy between the two models directly biases the treatment model $B$'s training data. $B$ trains on impressions that the control model $A$ selected and won---ads that $B$ may not have ranked first, in auctions that $B$'s bid may not have won---shifting $B$'s learned distribution away from what it would have generated under its own serving.
A natural and straightforward response is to stop sharing logs: train each model only on its own logs. 
In production RTB at deployment scale, reducing the size of the training dataset measurably degrades model quality---a cost that Brennan et al.~\cite{brennan2025symbiosis} flag for the data-diverted design and this is independently supported by empirical data-scaling laws for recommendation models~\cite{ardalani2022}.

The trade-off comes down to how much of the logs from the counterpart model to retain in training. Log-sharing retains all of them (data-rich but biased) and Log-splitting retains none (unbiased but data-starved) of them. 
Between these extremes lies a continuum of strategies that decides whether to keep using each row or drop some data based on which the two models would have made substantially different ranking or bidding decisions.
Any such strategy is defined by its tolerance to each kind of disagreement --- ad ranking and bid pricing --- with log-sharing as the maximally permissive endpoint and log-splitting as the maximally strict one. We formalize this middle path as the \textbf{Bid-Aware Filter Family (BAFF)}, a class of hard filters that keeps the shared log but drops some logs on which both models vary enough to distort either the top-ranked ad or the submitted bid-price.

Our contributions are as follows:
\begin{itemize}[nosep]
\item We decompose training-data interference in RTB into two observable channels: ad-ranking and bid-pricing disagreement. Building on this decomposition, we formalize the \textbf{Bid-Aware Filter Family}, a $(k,l)$-parameterized filter family that interpolates between log-sharing and log-splitting, providing a structured search space for locating the least-biased data-sharing strategy in a given deployment.
  \item We propose a three-stage online measurement protocol that enables direct comparison of data-sharing strategies for identifying the least-biased option in production.
  \item We validate the framework in both simulation and a live RTB deployment, observing $(k,l)$ operating points that are less biased than log-sharing and log-splitting.
\end{itemize}

The remainder of the paper is organized as follows. Section~\ref{sec:rel_work} surveys related work. Section~\ref{sec:method} defines the BAFF and analyzes a ridge-regression surrogate that motivates the filter's design. Section~\ref{sec:simulation} validates the framework in a controlled simulation with known ground truth. Section~\ref{sec:online} presents a three-stage measurement protocol and reports results from a live RTB deployment. Section~\ref{sec:limitations} discusses limitations and future work, and Section~\ref{sec:conclusion} concludes.

\section{Related Work}
\label{sec:rel_work}

\paragraph{Bias in ML A/B tests.}
Prior work on training data interference in ML A/B tests has developed almost entirely outside real-time bidding (RTB), under settings whose log-generation mechanism differs from ours. In two-sided recommender and marketplace platforms, \citet{jeunen2023} first articulated that pooled training data couples the two arms of an A/B test through model interference even absent user-level network effects, and \citet{brennan2025symbiosis} formalized this coupling as \emph{symbiosis bias} via a theoretical model, comparing cluster-randomized, data-diverted, and user-corpus co-diverted designs through simulation and validating symbiosis bias empirically on data from a large-scale global-recommender A/B test; the country-diverted design used as a practical mitigation of network effects on a production recommender was deployed by \citet{lin2024}. A related but distinct line on marketplace bipartite interference \citep{brennan2022bipartite} targets two-sided platforms where units interact through the other side of the graph rather than through a shared training set. A second line treats interference as a multi-armed-bandit phenomenon and asks when shared data still lets A/B experiments rank algorithms correctly \citep{li2025choosing}. A third line intervenes at the market mechanism rather than the training set---budget pacing \citep{liao2024}, ranked-list position competition \citep{goli2024bias}, divergent creative delivery \citep{braun2024divergent}, and network-aware randomization \citep{jiang2024mixed}---and the AI-feedback-loop survey of \citet{stoecker2025bias} catalogs $24$ such studies in recommender systems. None of these share RTB's defining property that a training row exists only when the bidding policy wins the auction, so which arm enters the pooled log is endogenously determined by both ad selection \emph{and} bid price. Our work addresses this RTB-specific coupling directly.

\paragraph{Off-policy evaluation and learning.}
A complementary line replaces online A/B with logged-bandit analysis, but its RTB instantiations remain focused on single-policy evaluation and do not address training data interference. Off-policy evaluation (OPE) of a frozen target policy is well developed in advertising, from the counterfactual reasoning framework of \citet{bottou2013counterfactual} to IPS-based empirical risk for logged bandit feedback \citep{swaminathan2015counterfactual}, counterfactual estimators benchmarked against online A/B outcomes on a production recommender \citep{gilotte2018offline}, and deficient-support diagnoses that propose remedies including restricting the policy class to the logging support \citep{sachdeva2020deficient}; RTB-native estimators further address auction-specific degeneracies---\citet{yeom2024dpmope} repurposed the bid-landscape model as a propensity approximator for OPE under deterministic winner-takes-all selection, validated against a 2-week online A/B test---while \citet{barajas2021ghost} use double-blind ghost bidding to measure ad-exposure incrementality in a running programmatic campaign, and \citet{jeunen2023auctiongym} benchmark value-, policy-, and doubly-robust off-policy learning approaches to bidding in simulation. These address single-policy evaluation or frozen-policy incrementality, not the training-time interference that arises when two learning policies share a pooled RTB log. On the off-policy learning (OPL) side, the closest work is \citet{si2023interference}, who trains each arm with a weighted loss whose weights come from a classifier predicting whether a data point was generated under treatment or control; their setting, however, is a generic recommender A/B without bidding, budget, or revenue-linked outcomes, and the approach is evaluated only in synthetic simulation. Direct transfer to RTB is non-trivial because weights must account for the stochastic bid landscape that determines whether a bid wins, a quantity absent from non-auction settings. To the best of our knowledge, (i) no prior work formulates OPL for training data interference in RTB, and (ii) no prior work demonstrates online mitigation of this interference on an RTB system. Our Bid-Aware Filter Family and three-stage measurement protocol close both gaps.

\section{Methodology}
\label{sec:method}
In RTB, one observable form of the bias induced by sharing logs is the disagreement between the two models' outputs. On each incoming bid request $x$, a model scores every candidate ad by combining its estimated score with the advertiser's valuation, selects the ad maximizing the resulting score, and submits that maximum as the bid price. Both the selected ad and the submitted bid are deterministic functions of the model's scores. Any disagreement between models $A$ and $B$ therefore propagates to the logged request through two observable channels, the selected ad and the submitted bid. These are the channels along which differences in the control model's logs enter the treatment model's training set.

The remainder of this section is organized as follows. Section~\ref{subsec:filterdefinition} defines the filter family. Its two axes are the two observable channels identified above, each capturing disagreement along the corresponding channel. Section~\ref{subsec:algorithm} presents the algorithm that applies this filter as an offline preprocessing step, before the treatment model trains. Section~\ref{subsec:theory} then analyzes a ridge-regression surrogate whose closed-form bound decomposes the filtered estimator's parameter deviation into three factors. This decomposition provides the theoretical motivation for the filter's design.

\subsection{Filtering Criterion}
\label{subsec:filterdefinition}

\paragraph{Setup.}
We operate as a demand-side bidder within an RTB. For each incoming bid request $x \in \mathcal{X}$, an internal traffic router assigns $x$ to one of our models. The assigned model ranks the $K$ candidate ads available for the request, selects a winner $a \in \{1, 2, \dots, K\}$, computes a bid price from its predicted value for $(x, a)$, and submits the bid to the ad exchange, where the submitted bid competes against bids from other demand-side participants. A training example is produced only when our bid wins the auction. The chosen ad is then served as an impression, and the user's feedback, such as a click or conversion, is attached to $(x, a)$.

\paragraph{Disagreement scores.}
Let $a_A(x), a_B(x) \in \{1, 2, \dots, K\}$ be the ad that each model would have selected on request $x$ and $bp_A(x), bp_B(x)\allowbreak > 0$ be the corresponding bid prices. Since both models can score the full candidate pool, these counterfactual selections and bid prices are restorable offline for any logged request. For a request $x$ logged by the model $A$, we quantify disagreement scores on two axes,
\[
d_{ad}(x):=\frac{\text{rank}_B(a_A(x) | x)}{K}, d_{bp}(x) := \frac{(bp_A(x)-bp_B(x))^+}{bp_A(x)},
\]
where $\text{rank}_B(\cdot|x) \in \{0, 1, \dots, K-1\}$ is the 0-based rank induced by the model $B$'s score (rank 0 $= B$'s top selection), and $(\cdot)^+=\max\{0, \cdot\}$. Both disagreement scores lie in $[0, 1)$ and vanish exactly when the two models agree on the corresponding axis.

\paragraph{Bid-Aware Filter Family.}
Given tolerances $(k, l) \in [0, 1]^2$, the \emph{Bid-Aware Filter} $\mathcal{F}_{k, l}$ retains a log produced by the control model $A$ for inclusion in the treatment model $B$'s training set if and only if
\[
d_{ad}(x) < k \;\land\; d_{bp}(x) < l.
\]
The parameterized collection $\{\mathcal{F}_{k, l} : (k, l) \in [0, 1]^2\}$ is the \emph{Bid-Aware Filter Family}. Letting $\varepsilon > 0$ denote an arbitrarily small tolerance, Table~\ref{tab:corners} lists the retained control logs at each corner of the parameter square. Two regimes correspond to standard baselines. On the boundary $\{k = 0\} \cup \{l = 0\}$, no control log passes the filter and each model trains on its own logs alone, a regime we call \emph{Log-Split}. At the opposite corner $(1, 1)$, every control log passes and both models share the full pooled log without filtering, a regime we call \emph{Naive}. The three remaining corners apply strict filtering on both axes or on a single axis. Interior values interpolate between these extremes. Here $k$ bounds the permitted ad-rank disagreement as a fraction of the candidate size, and $l$ bounds the permitted bid-price disagreement as a fraction of $bp_A$.

\begin{table}[ht]
  \centering
  \small
  \caption{Filter behavior at corners of the parameter square.}
  \label{tab:corners}
  \begin{tabular}{@{}lll@{}}
    \toprule
    $(k, l)$ & retained control logs & name \\
    \midrule
    $k = 0 \lor l = 0$ & none & Log-Split \\
    $(\varepsilon, \varepsilon)$ & $B$ selects $a_A$ and $bp_A \le bp_B$ & \\
    $(\varepsilon, 1)$ & $B$ selects $a_A$ & \\
    $(1, \varepsilon)$ & $bp_A \le bp_B$ & \\
    $(1, 1)$ & all control logs & Naive \\
    \bottomrule
  \end{tabular}
\end{table}

\subsection{Algorithm}
\label{subsec:algorithm}

The Bid-Aware filter $\mathcal{F}_{k, l}$ is realized as an offline step in the training-data assembly pipeline, executed whenever model $B$ retrains on its sliding window of recent logs.

\begin{algorithm}[ht]
  \caption{Bid-Aware Filtering}
  \label{alg:bid-aware}
  \begin{algorithmic}[1]
    \REQUIRE Control logs $\mathcal{D}_A$; treatment logs $\mathcal{D}_B$; tolerances $(k, l) \in [0, 1]^2$
    \STATE $\mathcal{D}_{\mathrm{train}} \gets \mathcal{D}_B$
    \FORALL{$(x, a_A, bp_A, y_A) \in \mathcal{D}_A$}
      \STATE Replay request $x$ through model $B$ to obtain its selected ad $a_B$ and bid price $bp_B$
      \STATE $d_{\mathrm{ad}} \gets \mathrm{rank}_B(a_A \mid x) \,/\, K$
      \STATE $d_{\mathrm{bp}} \gets \max(bp_A - bp_B,\; 0) \,/\, bp_A$
      \IF{$d_{\mathrm{ad}} < k$ \AND $d_{\mathrm{bp}} < l$}
        \STATE $\mathcal{D}_{\mathrm{train}} \gets \mathcal{D}_{\mathrm{train}} \cup \{(x, a_A, y_A)\}$
      \ENDIF
    \ENDFOR
    \STATE Train $B$ on $\mathcal{D}_{\mathrm{train}}$
  \end{algorithmic}
\end{algorithm}

Note that at serving time, the filter is inactive. The model continues to score, select, and bid as usual. Only at the training time, Algorithm~\ref{alg:bid-aware} adds a single forward pass of the model $B$ per log from the control model.

\subsection{Theoretical Analysis}
\label{subsec:theory}
This section asks how the filter's choice shapes the learned parameters of the treatment model. MLP-based architectures are non-convex, which rules out a closed-form description of how training-pool composition shifts the optimum. We therefore analyze a ridge-regression surrogate. In both ridge and the MLP, a single parameter vector is updated by every training sample, and this is the mechanism through which training-pool composition moves the optimum. The advantage of ridge is that its optimum can be written down in closed form. The main result (Theorem~\ref{thm:main}) bounds the parameter deviation of the filtered estimator by a product of three factors, two controlled by the filter and one by the feature geometry alone. The decomposition sets up the filter-design discussion that follows.

\paragraph{Setup.}
Consider an A/B test with the control model $A$ and the treatment model $B$. We analyze log distributions over features and labels $(z, y) \in \mathbb{R}^d \times \mathbb{R}$. Let $\mathcal{D}_B$ denote the log distribution of the treatment model $B$, and let $\mathcal{D}_F$ (resp. $\mathcal{D}_R$) be the retained (resp. removed) components of the control model $A$'s log distribution under the filter. Let $\alpha_B, \alpha_F, \alpha_R > 0$ with $\alpha_B + \alpha_F + \alpha_R = 1$ be their mixture proportions and let
\[
\mathcal{D}_B,\qquad
\mathcal{D}_{B+F} := \tfrac{\alpha_B \mathcal{D}_B + \alpha_F \mathcal{D}_F}{\alpha_B + \alpha_F},\qquad
\mathcal{D}_N := \alpha_B \mathcal{D}_B + \alpha_F \mathcal{D}_F + \alpha_R \mathcal{D}_R,
\]
be the $B$-only, filtered, and naive training pools, respectively. For each $P \in \{B, B{+}F, N\}$, let
\[
\theta^*_P := \argmin_\theta\, \mathbb{E}_P[(y - \theta^\top z)^2] + \lambda \|\theta\|^2
\]
denote the ridge optimum on pool $P$ at penalty $\lambda > 0$.
For each pool $P$, let $\Sigma_P := \mathbb{E}_P[zz^\top]$. Define the \emph{residual}
\[
\varepsilon^* \;:=\; y - \theta^{*\top}_B z,
\]
and, for any pool $S$, the \emph{mismatch}
\[
\delta^*_S \;:=\; \mathbb{E}_S[z\varepsilon^*] - \mathbb{E}_B[z\varepsilon^*].
\]
The mismatch $\delta^*_S$ records how much the feature-residual cross-moment $\mathbb{E}[z\varepsilon^*]$ shifts between $\mathcal{D}_B$ and pool $S$, that is, nonzero $\delta^*_S$ signals that pool $S$'s data systematically pulls the ridge optimum away from $\theta^*_B$.
Throughout this section, we use $\|\cdot\|$ for both the Euclidean norm on $\mathbb{R}^d$ and the spectral (operator) norm $\sup_{\|v\|=1}\|Av\|$ on matrices, which satisfies $\|Av\| \leq \|A\|\,\|v\|$ by definition.

\begin{lemma}[Core identity]\label{lem:core}
For each $P \in \{B,\, B{+}F,\, N\}$,
\[
\theta^*_P - \theta^*_B \;=\; (\Sigma_P + \lambda I)^{-1}\, \delta^*_P.
\]
\end{lemma}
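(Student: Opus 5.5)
The plan is to use the first-order optimality conditions of the ridge objective on each pool, together with the definition of the residual $\varepsilon^*$.

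\textbf{Step 1 (normal equations).} For any pool $P$, the objective $\mathbb{E}_P[(y-\theta^\top z)^2]+\lambda\|\theta\|^2$ is strictly convex in $\theta$, since its Hessian is $2(\Sigma_P+\lambda I)\succ 0$ for $\lambda>0$. Hence $\theta^*_P$ is the unique solution of
\[
(\Sigma_P+\lambda I)\,\theta^*_P \;=\; \mathbb{E}_P[zy].
\]
We assume finite second moments, so that $\Sigma_P$ and $\mathbb{E}_P[zy]$ exist. Specializing to $P=B$ gives $(\Sigma_B+\lambda I)\theta^*_B=\mathbb{E}_B[zy]$. Substituting $y=\theta^{*\top}_B z+\varepsilon^*$ rewrites this as
\[
\mathbb{E}_B[z\varepsilon^*] \;=\; -\lambda\theta^*_B.
\]

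\textbf{Step 2 (substitute the residual into pool $P$).} Write $y=\theta^{*\top}_B z+\varepsilon^*$ inside $\mathbb{E}_P[zy]$. This gives $\mathbb{E}_P[zy]=\Sigma_P\theta^*_B+\mathbb{E}_P[z\varepsilon^*]$. Therefore
\[
(\Sigma_P+\lambda I)(\theta^*_P-\theta^*_B) \;=\; \Sigma_P\theta^*_B+\mathbb{E}_P[z\varepsilon^*]-\Sigma_P\theta^*_B-\lambda\theta^*_B \;=\; \mathbb{E}_P[z\varepsilon^*]+\lambda\theta^*_B.
\]
By Step 1, $\lambda\theta^*_B=-\mathbb{E}_B[z\varepsilon^*]$. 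The right-hand side is therefore $\mathbb{E}_P[z\varepsilon^*]-\mathbb{E}_B[z\varepsilon^*]=\delta^*_P$.

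\textbf{Step 3 (invert).} Since $\Sigma_P\succeq 0$ and $\lambda>0$, the matrix $\Sigma_P+\lambda I$ is positive definite and hence invertible. Multiplying by its inverse yields the claimed identity. For $P=B$ it is consistent, since $\delta^*_B=0$.

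\textbf{Main obstacle.} There is no real obstacle here; the only care needed is in Step 1. The ridge penalty means $\mathbb{E}_B[z\varepsilon^*]$ is not zero but $-\lambda\theta^*_B$. The point to verify is that this term cancels exactly against $\lambda\theta^*_B$ in Step 2, which is why the mismatch is defined relative to $\mathbb{E}_B[z\varepsilon^*]$ rather than relative to zero. A secondary detail is that $\varepsilon^*$ is defined pointwise from $(z,y)$ using the fixed vector $\theta^*_B$. It is therefore a legitimate integrand under every pool, which the substitution in Step 2 requires.
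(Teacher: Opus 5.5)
Your route is the same as the paper's: the first-order condition for $\theta^*_P$ combined with the substitution $y = \theta^{*\top}_B z + \varepsilon^*$. The final identity is correct, but two intermediate equalities as written are false, and you reach the right conclusion only because their sign errors cancel.

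\textbf{Step 1.} Expanding the $B$ normal equation gives $\Sigma_B\theta^*_B + \lambda\theta^*_B = \Sigma_B\theta^*_B + \mathbb{E}_B[z\varepsilon^*]$. Hence $\mathbb{E}_B[z\varepsilon^*] = +\lambda\theta^*_B$, not $-\lambda\theta^*_B$. As a sanity check, ridge shrinks the fit, so the residual still carries part of the signal along $\theta^*_B$. In one dimension with $\mathbb{E}[z^2]=1$ and $y=cz$, one gets $\theta^*_B = c/(1+\lambda)$ and $\mathbb{E}[z\varepsilon^*] = c\lambda/(1+\lambda) = \lambda\theta^*_B$.

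\textbf{Step 2.} Your own middle expression $\Sigma_P\theta^*_B + \mathbb{E}_P[z\varepsilon^*] - \Sigma_P\theta^*_B - \lambda\theta^*_B$ simplifies to $\mathbb{E}_P[z\varepsilon^*] - \lambda\theta^*_B$, not $+\lambda\theta^*_B$.

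With both signs corrected, the argument closes directly:
\[
(\Sigma_P + \lambda I)(\theta^*_P - \theta^*_B) = \mathbb{E}_P[z\varepsilon^*] - \lambda\theta^*_B = \mathbb{E}_P[z\varepsilon^*] - \mathbb{E}_B[z\varepsilon^*] = \delta^*_P .
\]
Step 3, inverting the positive definite $\Sigma_P + \lambda I$, then finishes as you wrote.

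Your \emph{Main obstacle} remark has the right structural content: the mismatch is measured relative to $\mathbb{E}_B[z\varepsilon^*]$ because the ridge penalty makes that moment nonzero. However, it inherits the wrong Step~1 sign, so correct it there as well. The remaining ingredients are fine: strict convexity, finite second moments, treating $\varepsilon^*$ as a fixed pointwise function of $(z,y)$ that can be integrated under every pool, and the $P=B$ consistency check. Your argument in fact shows the identity for any pool with finite second moments, not only the three listed.
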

\begin{proof}
Follows from the first-order condition for $\theta^*_P$ together with $y = \theta^{*\top}_B z + \varepsilon^*$. The rest of the proof is omitted.
\end{proof}

Taking norms and applying submultiplicativity gives
\[
\|\theta^*_P - \theta^*_B\| \;\leq\; \|(\Sigma_P + \lambda I)^{-1}\| \cdot \|\delta^*_P\|,
\]
thus it suffices to bound the two factors on the right.

\begin{lemma}[Mismatch decomposition]\label{lem:decomp}
\[
\delta^*_N \;=\; \alpha_F\, \delta^*_F \;+\; \alpha_R\, \delta^*_R,
\qquad
\delta^*_{B+F} \;=\; \frac{\alpha_F}{\alpha_B + \alpha_F}\, \delta^*_F.
\]
\end{lemma}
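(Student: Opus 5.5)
The plan is to use linearity of expectation in the mixing distribution, applied to the fixed function $g(z,y) := z\,\varepsilon^* = z\,(y - \theta^{*\top}_B z)$. The key point is that $\varepsilon^*$ is defined through $\theta^*_B$ alone, so $g$ does not depend on the pool over which we integrate. Hence for any mixture $\sum_i w_i \mathcal{D}_i$ with nonnegative weights summing to one, $\mathbb{E}_{\sum_i w_i\mathcal{D}_i}[g] = \sum_i w_i\,\mathbb{E}_{\mathcal{D}_i}[g]$. I take integrability of $g$ under each component as implicit, as it already is in the setup where $\delta^*_S$ is defined for all these pools.

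For the naive pool, I will apply this with weights $(\alpha_B,\alpha_F,\alpha_R)$ to get $\mathbb{E}_N[g] = \alpha_B\mathbb{E}_B[g] + \alpha_F\mathbb{E}_F[g] + \alpha_R\mathbb{E}_R[g]$. Subtracting $\mathbb{E}_B[g]$ and writing it as $(\alpha_B+\alpha_F+\alpha_R)\,\mathbb{E}_B[g]$ (using $\sum\alpha=1$) gives
\[
\delta^*_N = \alpha_F(\mathbb{E}_F[g]-\mathbb{E}_B[g]) + \alpha_R(\mathbb{E}_R[g]-\mathbb{E}_B[g]) = \alpha_F\delta^*_F + \alpha_R\delta^*_R .
\]
The $\alpha_B$ term cancels because $\delta^*_B = 0$ trivially by the definition of the mismatch.

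For the filtered pool, I will use the weights $\alpha_B/(\alpha_B+\alpha_F)$ and $\alpha_F/(\alpha_B+\alpha_F)$, which sum to one. The same subtraction, with $\mathbb{E}_B[g]$ rewritten using these two weights, yields $\delta^*_{B+F} = \frac{\alpha_F}{\alpha_B+\alpha_F}\,\delta^*_F$.

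There is essentially no obstacle. The only point to state explicitly is that $\varepsilon^*$, and hence $g$, is a fixed function fixed once $\theta^*_B$ is fixed. That is what makes $\delta^*_S$ affine in the distribution $S$. It is also why the $\mathcal{D}_B$ component drops out: $\delta^*$ is measured relative to $\mathcal{D}_B$ itself.
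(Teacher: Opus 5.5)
Your proposal is correct and takes the same approach as the paper, which proves the lemma by linearity of expectation under the mixture decompositions of $\mathcal{D}_N$ and $\mathcal{D}_{B+F}$. Your observation that $z\varepsilon^*$ is a fixed function (it depends only on $\theta^*_B$, not on the pool being integrated over) is exactly what makes that argument work, and your explicit computation supplies the details the paper omits.
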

\begin{proof}
Follows from linearity of expectation under the mixture decompositions of $\mathcal{D}_N$ and $\mathcal{D}_{B+F}$. The rest of the proof is omitted.
\end{proof}

The two pools thus differ qualitatively: $\delta^*_{B+F}$ inherits only the $F$-component mismatch, whereas $\delta^*_N$ carries both $F$ and $R$ contributions. Whether filtering yields a smaller parameter deviation depends on the relative magnitudes of $\delta^*_F$ and $\delta^*_R$.

\begin{condition}[Mismatch dominance]\label{cond:dom}
There exists $\beta > \alpha_F/\alpha_R$ such that
\[
\|\delta^*_R\| \;\geq\; \beta\, \|\delta^*_F\|.
\]
\end{condition}

\begin{lemma}[Mismatch lower bound on Naive]\label{lem:lb}
Under Condition~\ref{cond:dom},
\[
\|\delta^*_N\| \;\geq\; (\alpha_R \beta - \alpha_F)\, \|\delta^*_F\|.
\]
\end{lemma}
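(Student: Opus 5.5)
The argument will combine the mismatch decomposition of Lemma~\ref{lem:decomp} with the reverse triangle inequality, and then apply Condition~\ref{cond:dom}. By Lemma~\ref{lem:decomp}, $\delta^*_N = \alpha_F \delta^*_F + \alpha_R \delta^*_R$. We isolate the $R$-term, which Condition~\ref{cond:dom} makes dominant, and subtract off the $F$-term. Concretely, the reverse triangle inequality $\|u+v\| \ge \|v\| - \|u\|$ with $u = \alpha_F\delta^*_F$ and $v = \alpha_R\delta^*_R$ gives
\[
\|\delta^*_N\| \;\geq\; \alpha_R \|\delta^*_R\| - \alpha_F \|\delta^*_F\|,
\]
where $\alpha_F, \alpha_R > 0$ let us pull the scalars out of the norms.

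Next, we insert Condition~\ref{cond:dom}, namely $\|\delta^*_R\| \ge \beta \|\delta^*_F\|$. Since $\alpha_R > 0$, multiplying by $\alpha_R$ preserves the inequality, so $\alpha_R\|\delta^*_R\| \ge \alpha_R\beta\|\delta^*_F\|$. Substituting this into the display yields
\[
\|\delta^*_N\| \;\geq\; (\alpha_R\beta - \alpha_F)\,\|\delta^*_F\|,
\]
which is the claim.

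We should also check that the bound is meaningful. The requirement $\beta > \alpha_F/\alpha_R$ is equivalent to $\alpha_R\beta - \alpha_F > 0$, so the lower bound is a positive multiple of $\|\delta^*_F\|$. It is therefore nontrivial whenever $\delta^*_F \neq 0$, and it holds trivially when $\delta^*_F = 0$. This positivity is not needed for the validity of the inequality, only for its usefulness: it later allows comparison with $\|\delta^*_{B+F}\| = \frac{\alpha_F}{\alpha_B+\alpha_F}\|\delta^*_F\|$.

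There is no real obstacle here. The only point requiring care is choosing the reverse triangle inequality in the orientation that keeps the $R$-component as the leading term, rather than the $F$-component. With the opposite orientation, Condition~\ref{cond:dom} could not be applied.
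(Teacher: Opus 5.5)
Your proof is correct and matches the paper's argument step for step. Both write $\delta^*_N = \alpha_F\delta^*_F + \alpha_R\delta^*_R$ via Lemma~\ref{lem:decomp}, apply the reverse triangle inequality with the $R$-term leading, and finish with Condition~\ref{cond:dom}; your remark that $\beta > \alpha_F/\alpha_R$ matters only for the bound's usefulness (a positive coefficient), not its validity, is also accurate.
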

\begin{proof}
The proof is given in Appendix~\ref{app:proof-lb}.
\end{proof}

\begin{theorem}[Main bound]\label{thm:main}
Under Condition~\ref{cond:dom},
\[
\|\theta^*_{B+F} - \theta^*_B\|
\;\leq\;
\underbrace{\frac{\alpha_F}{\alpha_B + \alpha_F}}_{A}
\cdot
\underbrace{\frac{1}{\alpha_R \beta - \alpha_F}}_{B}
\cdot
\underbrace{\left(1 + \frac{\|\Sigma_N\|}{\lambda}\right)}_{C}
\cdot
\|\theta^*_N - \theta^*_B\|.
\]
\end{theorem}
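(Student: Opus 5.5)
The plan is to chain the three preceding lemmas. We upper-bound the filtered deviation and lower-bound the naive deviation, both in terms of $\|\delta^*_F\|$, and then eliminate $\|\delta^*_F\|$ between the two inequalities.

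\emph{Upper bound on the filtered deviation.} Apply Lemma~\ref{lem:core} with $P = B{+}F$ and take norms. This gives $\|\theta^*_{B+F}-\theta^*_B\| \le \|(\Sigma_{B+F}+\lambda I)^{-1}\|\,\|\delta^*_{B+F}\|$. Since $\Sigma_{B+F}\succeq 0$, every eigenvalue of $\Sigma_{B+F}+\lambda I$ is at least $\lambda$, so the first factor is at most $1/\lambda$. By Lemma~\ref{lem:decomp}, $\|\delta^*_{B+F}\| = \frac{\alpha_F}{\alpha_B+\alpha_F}\|\delta^*_F\|$. Hence
\[
\|\theta^*_{B+F}-\theta^*_B\| \le \frac{1}{\lambda}\cdot\frac{\alpha_F}{\alpha_B+\alpha_F}\,\|\delta^*_F\|.
\]

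\emph{Lower bound on the naive deviation.} We need the reverse direction of Lemma~\ref{lem:core} for $P=N$. Invert the identity to get $\delta^*_N = (\Sigma_N+\lambda I)(\theta^*_N-\theta^*_B)$. Then
\[
\|\delta^*_N\| \le \|\Sigma_N+\lambda I\|\,\|\theta^*_N-\theta^*_B\| = (\|\Sigma_N\|+\lambda)\,\|\theta^*_N-\theta^*_B\|,
\]
where the equality holds because $\Sigma_N$ is PSD, so the spectral norm of $\Sigma_N+\lambda I$ is its top eigenvalue $\|\Sigma_N\|+\lambda$. Lemma~\ref{lem:lb} gives $\|\delta^*_N\|\ge(\alpha_R\beta-\alpha_F)\|\delta^*_F\|$, and Condition~\ref{cond:dom} makes $\alpha_R\beta-\alpha_F>0$, so we may divide by it:
\[
\|\delta^*_F\| \le \frac{\|\Sigma_N\|+\lambda}{\alpha_R\beta-\alpha_F}\,\|\theta^*_N-\theta^*_B\|.
\]

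\emph{Combining.} Substitute the second display into the first. The factor $\frac1\lambda(\|\Sigma_N\|+\lambda)$ equals $1+\|\Sigma_N\|/\lambda$, which is exactly factor $C$. Factors $A$ and $B$ appear directly, giving the claimed bound.

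The main obstacle is the lower-bound step. The earlier text only records the forward (upper) inequality from Lemma~\ref{lem:core}, so the operator-norm inequality must be run in the inverse direction with $\Sigma_N+\lambda I$. Its norm must also be identified as $\|\Sigma_N\|+\lambda$ via positive semidefiniteness, which is what produces the condition-number-like factor $C$. Everything else is direct substitution of the stated lemmas.
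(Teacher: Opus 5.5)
Your proposal is correct and follows the paper's proof essentially step for step: you apply Lemma~\ref{lem:core} at $P=B{+}F$ with $\|(\Sigma_{B+F}+\lambda I)^{-1}\|\le 1/\lambda$ and Lemma~\ref{lem:decomp}, then use Lemma~\ref{lem:lb} to pass to $\|\delta^*_N\|$, and finally invert Lemma~\ref{lem:core} at $P=N$ to get $\|\delta^*_N\|\le(\|\Sigma_N\|+\lambda)\,\|\theta^*_N-\theta^*_B\|$. The only cosmetic difference is that you identify $\|\Sigma_N+\lambda I\|=\|\Sigma_N\|+\lambda$ exactly via positive semidefiniteness, where the paper needs only the inequality.
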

\begin{proof}
Combine Lemma~\ref{lem:core} (applied to both $P = B{+}F$ and $P = N$), Lemma~\ref{lem:decomp}, Lemma~\ref{lem:lb}, and submultiplicativity of the spectral norm with $\|(\Sigma_{B+F} + \lambda I)^{-1}\| \leq 1/\lambda$. The detailed proof is given in Appendix~\ref{app:proof-main}.
\end{proof}

\paragraph{Anatomy of the bound.}
Theorem~\ref{thm:main} decomposes the parameter deviation of the filtered estimator into three factors. Factor $A = \alpha_F/(\alpha_B + \alpha_F)$ is the share of the filtered pool drawn from the control model, and quantifies how much control exposure the filter retains. Factor $B = 1/(\alpha_R\beta - \alpha_F)$ shrinks when mismatch on the removed set dominates mismatch on the retained set, and measures how cleanly the filter separates the two. Factor $C = 1 + \|\Sigma_N\|/\lambda$ is a purely geometric quantity, determined by the feature distribution and $\lambda$ alone. The decomposition isolates \emph{how much} is shared, \emph{which} portion of what is shared is harmful, and \emph{how} the feature geometry amplifies what remains.

\paragraph{Selectivity as the design objective.}
Factor $A$ can be shrunk mechanically by rejecting more logs, but in the limit $\alpha_F \to 0$ no control log is retained at all, and the filtered estimator reduces to $\theta^*_B$. The distinguishing work of a filter is therefore done along factor $B$, which concerns not \emph{how many} control logs are retained but \emph{which} ones. A good filter concentrates the systematic disagreement with the treatment model in the logs it removes. Theorem~\ref{thm:main} should thus be read as a statement about \emph{what a filter ought to separate}, not how aggressive it should be.

\paragraph{Implications for BAFF}
The filter family $\{\mathcal{F}_{k, l}\}$ introduced in Section~\ref{subsec:filterdefinition} can be read through this lens. Its two axes, ad-rank disagreement $d_{ad}$ and bid-price disagreement $d_{bp}$, are observable proxies for the mismatch direction $\mathbb{E}[z\varepsilon^*]$ that factor $B$ asks the filter to separate. By rejecting logs on which the two models' outputs diverge along either axis, the filter concentrates systematic disagreement in the removed set. Theorem~\ref{thm:main} does not prescribe a particular $(k, l)$, but it provides the framework within which the filter family is a principled approximation to the selectivity objective.

\paragraph{Scope.}
Theorem~\ref{thm:main} is a population statement about ridge optima, and Condition~\ref{cond:dom} is an assumption on the filter rather than a property we derive. The closed-form identity does not carry over literally to non-convex, finite-sample training, but the underlying mechanism transfers. Pooled training pulls the optimum toward the population mismatch direction, and the three factors identify what controls the strength of that pull. Section~\ref{sec:online} verifies this empirically for deep CTR models.

            \section{Simulation Experiment}
            \label{sec:simulation}

    This section validates the BAFF in a simulated RTB auction where the ground truth is known by construction. We sweep the $(k, l)$ grid and measure how closely each filtered model recovers the policy of an interference-free reference.

            \subsection{Environment}

        Table~\ref{tab:symbols} summarizes the notation used throughout this section.

        \begin{table}[ht]
        \caption{Key symbols used in the simulation setup.}
        \label{tab:symbols}
        \centering
        \small
        \begin{tabular}{@{}lll@{}}
        \toprule
        Symbol & Description & Value / Distribution \\
        \midrule
        $x$                           & User context                              & $\mathcal{N}(0, I_{100})$ \\
        $K$                           & Ad candidate pool size                    & 30 \\
        $v_a$                         & Ad value                                  & $\text{LogNormal}(0.1, 0.2)$ \\
        $A, B$                        & Control / treatment models                & --- \\
        $B_{\text{init}}$             & $B$ after Phase~0 training (frozen)       & --- \\
        $\hat{p}_M(x, a)$             & Model $M$'s predicted CTR                 & --- \\
        $\mathcal{D}_B^{\text{full}}$ & Universe-$B$ log (reference)            & --- \\
        $\mathcal{D}_A, \mathcal{D}_B$ & Universe-$AB$ logs (50/50 split)        & --- \\
        $\pi_M(a \mid x)$             & Evaluation policy for model $M$           & $\propto \exp(\hat{p}_M(x,a)\, v_a)$ \\
        \bottomrule
        \end{tabular}
        \end{table}

        We build on the CTR model and ad candidate structure of AuctionGym~\cite{jeunen2023auctiongym}, but replace its multi-agent auction with a single-agent setup suited to our A/B testing scenario. A user arrives with context $x \sim \mathcal{N}(0, I_{100})$, and a model must select one of $K = 30$ ads, each characterized by a fixed embedding $\boldsymbol{\phi}_a$, bias $\beta_a$, and value $v_a \sim \text{LogNormal}(0.1, 0.2)$, all drawn once and held constant throughout the simulation. The probability that the user clicks on a shown ad is governed by a true CTR:
        \[
            \text{CTR}(x, a) = \sigma(x^\top \boldsymbol{\phi}_a + \beta_a),
        \]
        where $\sigma(\cdot)$ is the sigmoid function. Ad selection and bidding follow Section~\ref{sec:method}: each model $M$ selects $a_M(x)$ and bids $bp_M(x)$. Following a first-price auction, an impression is won iff $bp_M(x) \geq \text{mp}(x)$, and the winner pays its own bid. Rather than simulating competing bidders, we model competition via a synthetic market price derived from the \emph{true} CTR:
        \[
            \text{mp}(x) = \max_{a} \bigl(\text{CTR}(x,a)\, v_a\bigr) \cdot \gamma \cdot \eta, \quad \eta \sim \text{LogNormal}(0, 0.3),
        \]
        where $\gamma \in (0, 1)$ controls the competitiveness of the market.
        The term $\max_a(\text{CTR}(x,a)\, v_a)$ is the highest achievable expected value for request $x$, so $\gamma = 0.5$ places the market price at roughly half of this ceiling, representing a moderately competitive market.
        The multiplicative noise $\eta \sim \text{LogNormal}(0, 0.3)$ adds per-request price variation (90\% of draws within $\times 0.6$--$1.6$).
        This provides a controllable auction threshold without requiring a full multi-agent simulation. When our bid wins, a click is drawn as $y \sim \text{Bernoulli}\bigl(\text{CTR}(x, a_M(x))\bigr)$.

        \paragraph{Model configuration.} We use $x \in \mathbb{R}^{100}$ throughout; the true CTR and market price operate on the full context. Both $A$ and $B$ are architecturally restricted to a prefix of $x$: $A$ observes $x_{1:20}$ and $B$ observes $x_{1:50}$, with the remaining dimensions acting as residual confounders. Architecture, hyperparameters, and all numeric settings are reported in Appendix~\ref{app:impl}.

        \subsection{Multiverse Design}
        \label{subsec:multiverse-design}

        We use a \emph{multiverse} design that provides an interference-free ground truth.

        \paragraph{Phase 0 (Initialization).} A pool of 100{,}000 synthetic warm-up users is drawn from the same distribution. To produce $A$'s serving log we face a circular dependency: training $A$ requires a serving log, but generating a serving log requires a trained $A$. We resolve this with a \emph{surrogate model for $A$}: instead of learning an MLP, we compute pCTR directly from the true data-generating process (DGP) parameters restricted to $A$'s 20 observable features and add Gaussian noise (std $= 0.2$) to approximate the prediction error of a converged model. This surrogate replaces only the pCTR used for ad selection and bidding; user contexts $x$, market prices, and click labels are all generated from the full DGP (100 dimensions) as usual. The resulting impression log is used to fit both $A$ and $B_{\text{init}}$ on the \emph{same} data, so downstream disagreement during Phases~1--2 arises from their differing observable-feature sets, not from different training histories.

        \paragraph{Phase 1 (Multiverse split).} We run two parallel universes from the same user population:
        \begin{itemize}[nosep]
            \item \textbf{Universe-$B$}: $B_{\text{init}}$ serves 100\% $\rightarrow$ $\mathcal{D}_B^{\text{full}}$ (ground truth)
            \item \textbf{Universe-$AB$}: $A$ and $B_{\text{init}}$ at 50/50 $\rightarrow$ $\mathcal{D}_A, \mathcal{D}_B$
        \end{itemize}
        Here $\mathcal{D}_A$ and $\mathcal{D}_B$ denote the serving logs of $A$ and $B_{\text{init}}$, respectively. Models are frozen during collection. $\mathcal{D}_B^{\text{full}}$ has roughly twice the volume of $\mathcal{D}_B$, since Universe-$B$ serves $B_{\text{init}}$ to 100\% of traffic; filtered $\mathcal{D}_A$ subsets unioned with $\mathcal{D}_B$ can partially offset this gap. Duration and traffic volume are reported in Appendix~\ref{app:impl}.

        \paragraph{Phase 2 (Filtering, Training \& Evaluation).} We run the scoring and bidding rule of the frozen $B_{\text{init}}$ on every stored context $x \in \mathcal{D}_A$ to obtain $(a_B(x), bp_B(x))$, then apply the $(k, l)$ filter (Section~\ref{subsec:filterdefinition}). Re-scoring reveals that $80.1\%$ of $\mathcal{D}_A$ impressions have different ad selections between $A$ and $B_{\text{init}}$, whereas only $7.3\%$ have lower bid prices under $B_{\text{init}}$ than under $A$. Each filtered subset is unioned with $\mathcal{D}_B$ to train a candidate model; a reference model is trained on $\mathcal{D}_B^{\text{full}}$.

        To evaluate, we draw held-out users from the same context distribution (details in Appendix~\ref{app:impl}) and compute, for each trained model, the evaluation policy
        \[
            \pi_M(a \mid x) = \frac{\exp\bigl(\hat{p}_M(x,a) \cdot v_a\bigr)}{\sum_{a'=1}^{K} \exp\bigl(\hat{p}_M(x,a') \cdot v_{a'}\bigr)},
        \]
        where $M$ is the model under evaluation. This softmax converts each model's value-weighted scores into a stochastic policy. We report $\text{KL}\bigl(\pi_{\text{ref}} \,\|\, \pi_{\text{cand}}\bigr)$, where $\pi_{\text{ref}}$ is the policy of the reference model trained on $\mathcal{D}_B^{\text{full}}$ and $\pi_{\text{cand}}$ is the policy of each filtered candidate model; lower KL indicates less policy distortion from the interference-free reference. All results are averaged over 10 independent seeds; statistical details are in Appendix~\ref{app:impl}.

            \subsection{Results}
            \label{sec:results}

            We sweep the Bid-Aware filter $\mathcal{F}_{k,l}$ (Section~\ref{subsec:filterdefinition}) over a $3 \times 3$ grid and report KL divergence to the Universe-$B$ reference policy. Note that $k{=}\varepsilon$ corresponds to the strict ad filter: $d_{ad}(x) < \varepsilon$ holds iff $a_A(x)$ is $B$'s selected ad, recovering the $(\varepsilon, 1)$ corner of the filter family. Table~\ref{tab:grid} shows the KL magnitude for each $(k, l)$ cell. Tables~\ref{tab:wins} and~\ref{tab:wins-logsplit} test whether non-trivial $(k, l)$ configurations improve over the two corner baselines---Naive $(k{=}1,\, l{=}1)$ and Log-Split $(\{k{=}0\} \cup \{l{=}0\})$---via paired one-sided $t$-tests.

            \begin{table}[ht]
            \caption{Policy distortion (KL $\times 10^{-3}$, mean $\pm$ std, 10 seeds) across the $(k, l)$ grid. Parenthesized values show the fraction of $\mathcal{D}_A$ retained by each filter. Rows $l{=}1$ and $l{=}0.5$ yield identical filtered subsets because bid disagreement is sparse (cf.\ Phase~2).}
            \label{tab:grid}
            \centering
            \small
            \begin{tabular}{lccc}
            \toprule
            & $k{=}\varepsilon$ & $k{=}0.5$ & $k{=}1$ \\
            \midrule
            $l{=}\varepsilon$  & 0.54$\pm$0.48\;(20\%) & 3.29$\pm$3.12\;(60\%) & 6.82$\pm$3.54\;(92\%) \\
            $l{=}0.5$          & 0.51$\pm$0.46\;(22\%) & 2.87$\pm$3.08\;(65\%) & 6.61$\pm$2.59\;(100\%) \\
            $l{=}1$            & 0.51$\pm$0.46\;(22\%) & 2.87$\pm$3.08\;(65\%) & 6.61$\pm$2.59\;(100\%) \\
            \midrule
            \multicolumn{4}{l}{Log-Split ($k{=}0$ or $l{=}0$): \quad 0.80$\pm$0.55\;(0\%)} \\
            \bottomrule
            \end{tabular}
            \end{table}

            \begin{table}[ht]
            \caption{Wins vs Naive (paired one-sided $t$-test across 10 seeds). Each cell shows wins/$N$ for the alternative hypothesis that the $(k, l)$ filter achieves lower KL than Naive. $^*$Significant at $p < 0.05$. $^\dagger$Self-comparison (Naive).}
            \label{tab:wins}
            \centering
            \small
            \begin{tabular}{lccc}
            \toprule
            & $k{=}\varepsilon$ & $k{=}0.5$ & $k{=}1$ \\
            \midrule
            $l{=}\varepsilon$  & \textbf{10/10}$^*$ & \textbf{9/10}$^*$ & 6/10 \\
            $l{=}0.5$          & \textbf{10/10}$^*$ & \textbf{7/10}$^*$ & 0/10 \\
            $l{=}1$            & \textbf{10/10}$^*$ & \textbf{7/10}$^*$ & ---$^\dagger$ \\
            \bottomrule
            \end{tabular}
        \end{table}

            \begin{table}[ht]
            \caption{Wins vs Log-Split (paired one-sided $t$-test across 10 seeds). Each cell shows wins/$N$ for the alternative hypothesis that the $(k, l)$ filter achieves lower KL than Log-Split. $^*$Significant at $p < 0.05$.}
            \label{tab:wins-logsplit}
            \centering
            \small
            \begin{tabular}{lccc}
            \toprule
            & $k{=}\varepsilon$ & $k{=}0.5$ & $k{=}1$ \\
            \midrule
            $l{=}\varepsilon$  & \textbf{9/10}$^*$  & 3/10 & 0/10 \\
            $l{=}0.5$          & \textbf{10/10}$^*$ & 3/10 & 0/10 \\
            $l{=}1$            & \textbf{10/10}$^*$ & 3/10 & 0/10 \\
            \bottomrule
            \end{tabular}
        \end{table}

            \paragraph{Beyond the corner baselines.}Table~\ref{tab:grid} shows that Log-Split already achieves an $8.2\times$ KL reduction over Naive (KL $\times 10^{-3}$: 0.80 vs 6.61), consistent with the expectation that unfiltered $\mathcal{D}_A$ degrades the trained policy (cf.\ Section~\ref{sec:method}). More broadly, seven of the eight non-Naive cells in Table~\ref{tab:wins} achieve lower KL than Naive in a majority of paired seeds. However, Table~\ref{tab:wins-logsplit} reveals that the search space contains strategies that go further: $k{=}\varepsilon$ cells outperform Log-Split (all $p \leq 0.008$), demonstrating that even retaining only ${\sim}20\%$ of $\mathcal{D}_A$ through well-targeted filtering provides information beyond $\mathcal{D}_B$ alone. This motivates the $(k, l)$ framework as a practical tool for locating operating points that neither obvious baseline can reach.

            \paragraph{Ad axis dominance in this DGP} Table~\ref{tab:grid} shows that moving from $k{=}1$ to $k{=}\varepsilon$ reduces KL by more than an order of magnitude (6.61 $\to$ 0.51), making the $k{=}\varepsilon$ column the best-performing region of the grid. Varying $l$ has negligible effect. This reflects the mismatch structure of this environment, where ad-ranking disagreement is far more prevalent than bid-pricing disagreement (80.1\% vs.\ 7.3\% of $\mathcal{D}_A$; cf.\ Phase~2).

            \paragraph{Setting-dependence of the optimal $(k, l)$.} The relative importance of each axis depends on the mismatch between $A$ and $B$. In environments with higher bid disagreement (e.g., different bid shading strategies), the $l$ axis may contribute independently. The $(k, l)$ grid provides a practical search space for locating the best operating point without committing to a fixed strategy.

\section{Online Experiment}
\label{sec:online}

Building on the Bid-Aware Filter Family introduced in Section~\ref{sec:method}, we instantiate the family at four operating points of $[0,1]^2$ and ask whether the filters preserve the reference model's operating point on the most business-relevant online metric, CPC (cost per click, the average advertiser spend per click). We describe the production setting (Section~\ref{sec:online-prod-setting}), a three-stage measurement protocol (Section~\ref{sec:online-protocol}), the evaluation criterion (Section~\ref{sec:reference-preservation}), and the results of a case study on a single advertiser--SSP (Supply-side Platform) pair (Section~\ref{sec:online-results}).

            \begin{figure*}[t]
              \centering
              \includegraphics[width=0.76\textwidth]{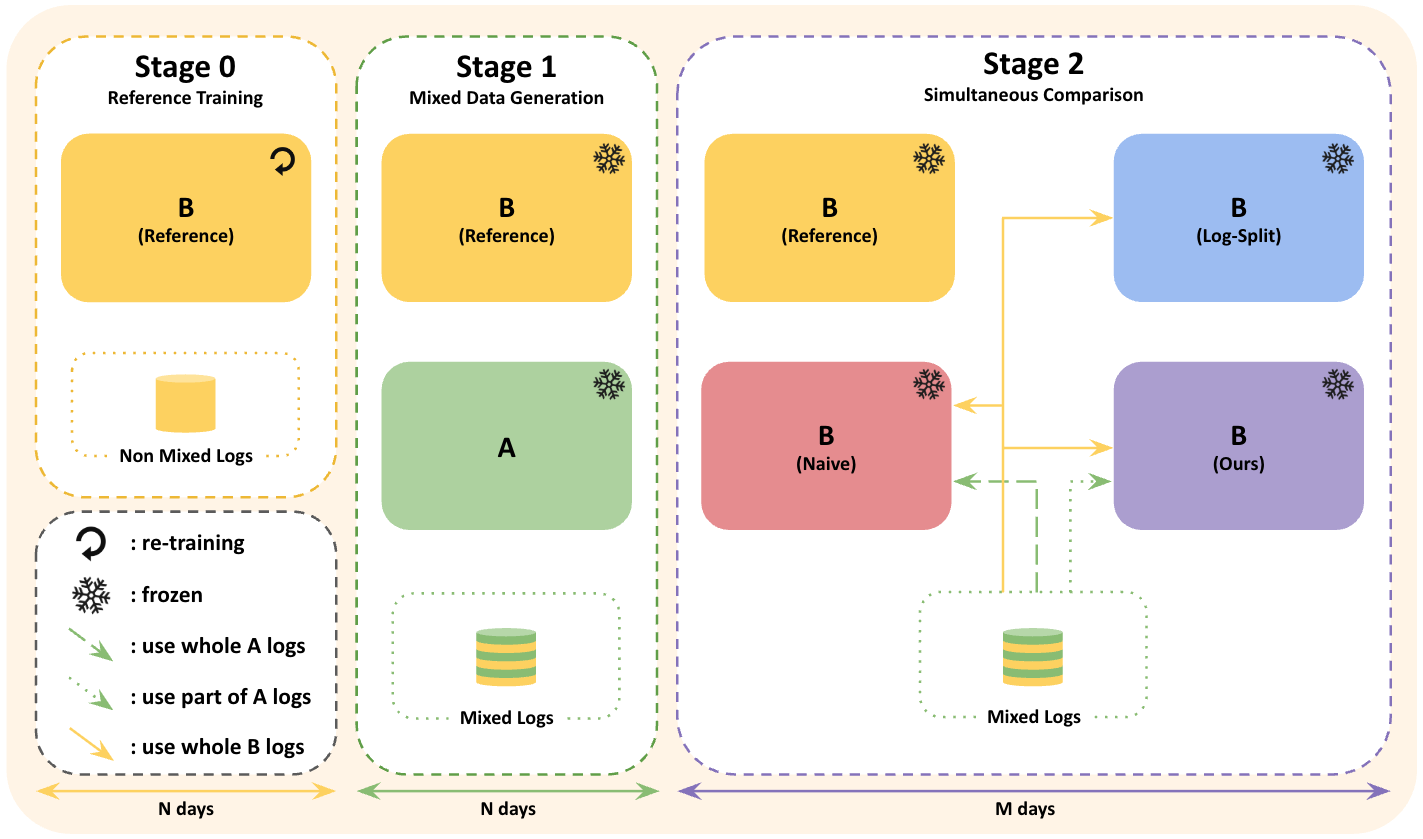}
              \caption{Three-stage online experiment protocol. Stage~0 ($N$ days)
              creates a pure reference model; Stage~1 ($N$ days) generates mixed
              serving logs via a 50/50 A/B split; Stage~2 ($M$ days) compares five
              $(k,l)$ operating points simultaneously at 20\% traffic each. This
              work uses $N = 2$, $M = 3$.}
              \Description{A three-stage experiment timeline. Stage 0 lasts N=2 days and
              trains a pure reference model. Stage 1 lasts N=2 days and runs a 50/50 A/B
              split to generate mixed serving logs. Stage 2 lasts M=3 days and evaluates
              five (k,l) operating points in parallel, each on 20 percent of traffic.}
              \label{fig:stages}
            \end{figure*}

\subsection{Production Setting}
\label{sec:online-prod-setting}

We conduct the experiment on a mobile advertising DSP at a
single SSP, covering one advertiser on that SSP. The SSP sends approximately 1.25M bid requests per
day. Our DSP platform uses a shared-parameter CTR/CVR prediction model~\cite{ma2018entirespacemultitaskmodel},
implemented as an multi-task learning(MTL)-based deep neural network.

The single advertiser--SSP scope is a deliberate choice to
\emph{bound revenue loss}. Our three-stage protocol (Section~\ref{sec:online-protocol}) fixes the traffic share of the interference-free reference model $B_{\text{ref}}$ at 100\%, 50\%, and 20\% across Stages~0--2. These shares are decided by what we need to \emph{measure}, not by which model performs best on live business KPIs. The experiment therefore cannot reallocate traffic toward better-performing models during the run and is expected to earn less revenue than a KPI-based rollout would. Confining it to a partial marketplace keeps the size of this revenue loss small.

\subsection{Three-Stage Protocol}
\label{sec:online-protocol}

Prior work~\cite{brennan2025symbiosis, lin2024} has demonstrated
the \emph{existence} of training data interference in shared-dataset
A/B tests. However, no existing protocol directly measures the bias
of a specific data-sharing strategy in production online. Our
three-stage protocol---Stage~0 reference model construction, Stage~1
mixed-log generation, Stage~2 simultaneous comparison at 20\%
traffic---fills this gap and is itself a methodological contribution.

We parameterize the length of Stages~0 and 1 as $N$ days and the length of Stage~2 as $M$ days. Here $N$ controls the reference model's training-data period, and $M$ controls the number of Stage~2 observation days and therefore the precision of the $\text{TE}_s(m)$ estimates defined in Section~\ref{sec:reference-preservation}. \emph{Longer is strictly better} on both axes. In this
work, we use $N = 2$ and $M = 3$, shortened from values typical of the
underlying production platform in order to \emph{minimize business
risk}: Stage~0 allocates 100\% of traffic to $B_{\text{ref}}$ and
Stage~2 splits traffic five ways at fixed 20\% shares, and both
allocations are mandated by the protocol rather than gated on observed
performance, so we cap $N$ and $M$ at the smallest values still
sufficient to establish an interference-free reference model and a
well-powered Stage~2 comparison.

\paragraph{Stage 0: Reference Model Training ($N$ days).}
$B_{\text{ref}}$ is deployed at 100\% traffic with
retraining enabled. After $N$ days, $B_{\text{ref}}$'s training dataset
is \emph{flushed} of logs produced by other models, establishing a
\emph{pure} ground truth---the production analogue of the Multiverse-$B$
reference model in our simulation.

\paragraph{Stage 1: Mixed Data Generation ($N$ days).}
The baseline $A$ and $B_{\text{ref}}$ are deployed at a 50/50 split,
both \emph{frozen}. This reproduces the standard A/B test scenario in
which serving logs from both models are interleaved, generating the
mixed training dataset $\mathcal{D}_A \cup \mathcal{D}_B$.

\paragraph{Stage 2: Simultaneous Comparison ($M$ days).}
Five models are deployed at 20\% traffic each, all frozen:
\begin{itemize}[nosep]
  \item $B_{\text{ref}}$: Stage~0 checkpoint (reference model)
  \item $\mathcal{F}_{1,\,1}$ (\emph{Naive}): no filter
  \item $\mathcal{F}_{0,\,0}$ (\emph{Log-Split}): discard $\mathcal{D}_A$ entirely
  \item $\mathcal{F}_{1,\,\varepsilon}$ (\textbf{ours}): bid-price filter only
  \item $\mathcal{F}_{0.5,\,0.5}$ (\textbf{ours}): joint bid-and-ad filter at the $(0.5, 0.5)$ interior point
\end{itemize}
Figure~\ref{fig:stages} illustrates the protocol.

\subsection{Evaluation: Reference-Preservation}
\label{sec:reference-preservation}

A successful $(k, l)$ filter should \emph{match} the interference-free reference model $B_{\text{ref}}$, not outperform it. For any scalar metric $m$, we define the \emph{Treatment
Effect} (TE) of strategy $s$ on $m$ as
\begin{equation*}
  \label{eq:te}
  \text{TE}_s(m) \;=\; \bigl|\, m(B_s) \,-\, m(B_{\text{ref}}) \,\bigr|.
\end{equation*}
Higher CTR or lower CPC should \emph{not} be read as improvement; only
closeness to the reference model counts, since any deviation---in
either direction---moves downstream business decisions away from the
interference-free ground truth. Our primary instantiation uses CPC,
the DSP's core business KPI, evaluated at the post-auction aggregate
level; CTR is reported as a secondary metric.

\subsection{Results}
\label{sec:online-results}

\paragraph{CPC alignment (primary).}
Table~\ref{tab:online-cpc} reports the aggregate CPC and
$\text{TE}_s(\text{CPC})$ of each filter.

\begin{table}[t]
  \caption{Reference-preservation in CPC across $(k,l)$ filters,
  Stage~2 (20\% traffic each). Smaller
  $\text{TE}_s(\text{CPC})$ indicates closer preservation of the
  interference-free reference model. CPC is measured in local currency.
  95\% CI from cluster bootstrap (B\,=\,10{,}000 resamples).}
  \label{tab:online-cpc}
  \centering
  \small
  \begin{tabular}{lrlrr}
    \toprule
    Strategy & CPC & 95\% CI & $\text{TE}(\text{CPC})$ & rank \\
    \midrule
    Reference                                           & 19.58 & [13.33,\, 32.16]  & ---            & --- \\
    $\mathcal{F}_{1,\, 1}$ (Naive)                      &  9.73 & [5.95,\, 20.08]   & 9.85           & 4 \\
    $\mathcal{F}_{0,\, 0}$ (Log-Split) & 10.74 & [7.08,\, 16.59] & 8.84           & 3 \\
    $\mathcal{F}_{1,\, \varepsilon}$ \textbf{(ours)}    & 20.02 & [12.16,\, 34.61]  & \textbf{0.44}  & 1 \\
    $\mathcal{F}_{0.5,\, 0.5}$ \textbf{(ours)}          & 24.94 & [10.78,\, 131.49] & \textbf{5.37}  & 2 \\
    \bottomrule
  \end{tabular}
\end{table}

$\mathcal{F}_{1,\,\varepsilon}$ and $\mathcal{F}_{0.5,\,0.5}$ occupy
ranks~1 and~2 on $\text{TE}_s(\text{CPC})$ in
Table~\ref{tab:online-cpc}: both filters land within
$1.0$--$1.3\times$ of the reference model's CPC, whereas
$\mathcal{F}_{0,0}$ (Log-Split) and $\mathcal{F}_{1,\,1}$ (Naive)
collapse to roughly $0.5\times$. The confidence intervals---particularly for $\mathcal{F}_{0.5,0.5}$---are wide given the limited per-filter Stage 2 sample size; see Section~\ref{sec:limitations} for further discussion.

\paragraph{CTR preservation (secondary).}
Table~\ref{tab:online-ctr} shows the same directional separation on the CTR axis: $\mathcal{F}_{0.5,\,0.5}$ and $\mathcal{F}_{1,\varepsilon}$ stay within $0.4$--$0.8$\%p of the reference model's CTR, whereas $\mathcal{F}_{0,\,0}$ (Log-Split) and $\mathcal{F}_{1,\,1}$ (Naive) diverge by $1.4$--$1.9$\%p. By the TE framing in Section~\ref{sec:reference-preservation}, the baselines' inflated CTR---mirroring their depressed CPC---is a deviation, not a gain.
    
\begin{table}[t]
  \caption{Reference-preservation in CTR, Stage~2.
  95\% CI from cluster bootstrap (B\,=\,10{,}000 resamples).}
  \label{tab:online-ctr}
  \centering
  \small
  \begin{tabular}{lrlrr}
    \toprule
    Strategy & CTR (\%) & 95\% CI & $\text{TE}(\text{CTR})$ \%p & rank \\
    \midrule
    Reference                                           & 1.19 & [0.71,\, 1.78] & ---           & --- \\
    $\mathcal{F}_{1,\, 1}$ (Naive)                      & 3.13 & [1.48,\, 5.25] & 1.94          & 4 \\
    $\mathcal{F}_{0,\, 0}$ (Log-Split) & 2.58 & [1.71,\, 3.57] & 1.38       & 3 \\
    $\mathcal{F}_{1,\, \varepsilon}$ \textbf{(ours)}    & 1.97 & [1.20,\, 2.98] & \textbf{0.78} & 2 \\
    $\mathcal{F}_{0.5,\, 0.5}$ \textbf{(ours)}          & 0.79 & [0.12,\, 1.77] & \textbf{0.40} & 1 \\
    \bottomrule
  \end{tabular}
\end{table}

\section{Limitations and Future Work}
\label{sec:limitations}

\paragraph{Theoretical surrogate.}
The theoretical analysis in Section~\ref{subsec:theory} is carried out on a ridge-regression surrogate with population-level quantities. The closed-form decomposition (Theorem~\ref{thm:main}) does not extend directly to the non-convex, finite-sample regime of MLP-based CTR models used in both the simulation and production deployment. The bound identifies which structural factors govern the filtered estimator's deviation, but the quantitative tightness of each factor under deep models remains an open question.

\paragraph{Online experiment scope.}
The present experiment covers a single advertiser--SSP pair over $N = 2$, $M = 3$ days (on production scale, larger value of $N$ is also feasible), so external validity rests on future replication across advertisers, SSPs and advertiser goals. The Stage~0 reference-model checkpoint is frozen for the full duration of Stages~1--2, so the reference does not adapt to market drift within the run. Of the $(k,l)$ filter grid, we deploy only one interior operating point $(0.5, 0.5)$ together with the boundary filters $\mathcal{F}_{1,\,\varepsilon}$ and $\mathcal{F}_{0,\,0}$ (Log-Split) and the reference model; the remaining interior is not tested.

A direct consequence of this narrow deployment is that the absolute magnitude of the CPC and CTR gaps between filters is expected to be SSP-specific due to the instability of each variants. The single-SSP restriction, adopted to minimize business risk, places the experiment in a relatively uncongested marketplace slice whose limited competitor pool may amplify per-filter differences relative to larger, more saturated SSPs. Accordingly, our preservation claim concerns the TE-based directional ordering rather than the absolute gap size, and the magnitudes reported in Tables~\ref{tab:online-cpc}--\ref{tab:online-ctr} should be read together with this SSP-specific context. Within-run statistical precision is also limited. Still, both $\mathcal{F}_{1,\,\varepsilon}$ and $\mathcal{F}_{0.5,\,0.5}$ achieve lower TE than $\mathcal{F}_{0,\,0}$ (Log-Split) and $\mathcal{F}_{1,\,1}$ (Naive) on both CPC and CTR.
    
\paragraph{Reference model without dedicated serving.}
Our protocol measures training-data interference against a reference model constructed via dedicated 100\% serving in Stage~0. An open question for future work is whether such a reference can be constructed without a dedicated serving stage, which would make the measurement protocol applicable to settings where dedicating full traffic to the experiment model is infeasible.

\section{Conclusion}
\label{sec:conclusion}

Shared-log training in RTB A/B tests biases each model's training data through two channels: ad-selection disagreement and bid-price disagreement. We formalized the Bid-Aware Filter Family (BAFF), a $(k,l)$-parameterized class of hard filters that interpolates between log-sharing and log-splitting by controlling tolerance to each channel independently. The family provides a structured search space: rather than committing to log-sharing or log-splitting, practitioners can locate the operating point that best preserves the unbiased reference in their specific deployment.

We validated the framework on two axes. In simulation, a $3 \times 3$ $(k,l)$ sweep revealed operating points with lower policy  distortion than either log-sharing or log-splitting alone. In a live RTB deployment on a commercial DSP, a three-stage measurement protocol confirmed that filter-based variants preserve the oracle's CPC and CTR more closely than both baseline. Both experiments demonstrate that the $(k,l)$ grid surfaces operating points that neither obvious strategy can reach, and that the best operating point differs between the two settings---underscoring the value of the search space itself. We recommend that practitioners sweep the $(k, l)$ grid on their own deployment to locate the best operating point for their setting.

\bibliographystyle{ACM-Reference-Format}
\bibliography{references}

\balance
\appendix
\section{Proofs}
\label{app:proofs}

\subsection{Proof of Lemma~\ref{lem:lb}}
\label{app:proof-lb}

\begin{lemma*}[Restatement of Lemma~\ref{lem:lb}]
Under Condition~\ref{cond:dom},
\[
\|\delta^*_N\| \;\geq\; (\alpha_R \beta - \alpha_F)\, \|\delta^*_F\|.
\]
\end{lemma*}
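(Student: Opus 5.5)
The plan is to start from the mismatch decomposition in Lemma~\ref{lem:decomp}, which writes the Naive mismatch as $\delta^*_N = \alpha_F\,\delta^*_F + \alpha_R\,\delta^*_R$, and then apply the reverse triangle inequality to this two-term sum. The idea is to single out the removed-component term $\alpha_R\,\delta^*_R$ as the dominant piece and treat $\alpha_F\,\delta^*_F$ as a perturbation. This gives
\[
\|\delta^*_N\| \;\geq\; \alpha_R\|\delta^*_R\| - \alpha_F\|\delta^*_F\|,
\]
where we use $\alpha_F,\alpha_R>0$ to pull the scalars out of the norms.

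Next, we invoke Condition~\ref{cond:dom}, which supplies $\|\delta^*_R\|\geq\beta\|\delta^*_F\|$. Multiplying this by $\alpha_R>0$ and substituting into the display yields
\[
\|\delta^*_N\| \;\geq\; \alpha_R\beta\|\delta^*_F\| - \alpha_F\|\delta^*_F\| \;=\; (\alpha_R\beta-\alpha_F)\,\|\delta^*_F\|,
\]
which is exactly the claimed bound.

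No step here is a genuine obstacle; the argument is two lines once Lemma~\ref{lem:decomp} is in hand. The only point to check is the orientation of the reverse triangle inequality: we must use $\|u+v\|\geq\|u\|-\|v\|$ with $u=\alpha_R\delta^*_R$ and not the other way round. The hypothesis $\beta>\alpha_F/\alpha_R$ is not needed for the inequality itself. It only ensures that the constant $\alpha_R\beta-\alpha_F$ is strictly positive, so that the bound is nontrivial and factor $B$ in Theorem~\ref{thm:main} is well defined. It is worth remarking on this so the reader sees why Condition~\ref{cond:dom} is stated with a strict inequality.
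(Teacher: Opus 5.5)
Your proposal is correct and follows the paper's proof: substitute $\delta^*_N = \alpha_F\delta^*_F + \alpha_R\delta^*_R$ from Lemma~\ref{lem:decomp}, apply the reverse triangle inequality with $\alpha_R\delta^*_R$ as the leading term, and then use Condition~\ref{cond:dom}. Your remark that $\beta > \alpha_F/\alpha_R$ only serves to make the constant $\alpha_R\beta - \alpha_F$ positive also agrees with the paper's own comment.
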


\begin{proof}
We have
\begin{align*}
\|\delta^*_N\|
&= \|\alpha_F \delta^*_F + \alpha_R \delta^*_R\| \\
&\geq \alpha_R \|\delta^*_R\| - \alpha_F \|\delta^*_F\| \\
&\geq (\alpha_R \beta - \alpha_F)\,\|\delta^*_F\|.
\end{align*}
The equality directly follows from Lemma~\ref{lem:decomp}. The first inequality follows from the reverse triangle inequality. The second inequality follows from Condition~\ref{cond:dom}, with $\beta > \alpha_F/\alpha_R$ ensuring the coefficient is positive.
\end{proof}

\subsection{Proof of Theorem~\ref{thm:main}}
\label{app:proof-main}

\begin{theorem*}[Restatement of Theorem~\ref{thm:main}]
Under Condition~\ref{cond:dom},
\[
\|\theta^*_{B+F} - \theta^*_B\|
\;\leq\;
\underbrace{\frac{\alpha_F}{\alpha_B + \alpha_F}}_{A}
\cdot
\underbrace{\frac{1}{\alpha_R \beta - \alpha_F}}_{B}
\cdot
\underbrace{\left(1 + \frac{\|\Sigma_N\|}{\lambda}\right)}_{C}
\cdot
\|\theta^*_N - \theta^*_B\|.
\]
\end{theorem*}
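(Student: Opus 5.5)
The plan is to bound $\|\theta^*_{B+F}-\theta^*_B\|$ from above and $\|\theta^*_N-\theta^*_B\|$ from below, both in terms of the single quantity $\|\delta^*_F\|$, and then eliminate $\|\delta^*_F\|$.

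\emph{Upper bound on the filtered deviation.} Apply Lemma~\ref{lem:core} with $P=B{+}F$. Since $\Sigma_{B+F}\succeq 0$, every eigenvalue of $\Sigma_{B+F}+\lambda I$ is at least $\lambda$, so $\|(\Sigma_{B+F}+\lambda I)^{-1}\|\le 1/\lambda$. Submultiplicativity and the second identity of Lemma~\ref{lem:decomp} then give
\[
\|\theta^*_{B+F}-\theta^*_B\| \le \frac{1}{\lambda}\cdot\frac{\alpha_F}{\alpha_B+\alpha_F}\,\|\delta^*_F\|.
\]

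\emph{Lower bound on the naive deviation.} This is the step that needs care, because Lemma~\ref{lem:core} gives $\theta^*_N-\theta^*_B$ as the inverse matrix applied to $\delta^*_N$, and we need a lower bound rather than an upper one. Invert the identity instead: $\delta^*_N=(\Sigma_N+\lambda I)(\theta^*_N-\theta^*_B)$. Because $\Sigma_N$ is symmetric PSD, $\|\Sigma_N+\lambda I\|=\|\Sigma_N\|+\lambda$. Hence
\[
\|\delta^*_N\|\le(\|\Sigma_N\|+\lambda)\,\|\theta^*_N-\theta^*_B\|.
\]
Combining this with Lemma~\ref{lem:lb} yields
\[
(\alpha_R\beta-\alpha_F)\,\|\delta^*_F\|\le(\|\Sigma_N\|+\lambda)\,\|\theta^*_N-\theta^*_B\|.
\]
Condition~\ref{cond:dom} guarantees $\alpha_R\beta-\alpha_F>0$, so we may divide:
\[
\|\delta^*_F\|\le\frac{\|\Sigma_N\|+\lambda}{\alpha_R\beta-\alpha_F}\,\|\theta^*_N-\theta^*_B\|.
\]

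\emph{Combining.} Substitute this bound into the upper bound on the filtered deviation. The factor $(\|\Sigma_N\|+\lambda)/\lambda$ is exactly $1+\|\Sigma_N\|/\lambda$, which gives the product $A\cdot B\cdot C$ as stated. The only delicate point is the reversed use of Lemma~\ref{lem:core} for the naive pool: it goes through multiplication by $\Sigma_N+\lambda I$ together with the PSD norm identity, not through any bound on its inverse.
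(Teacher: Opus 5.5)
Your proposal is correct and follows essentially the same route as the paper's proof: Lemma~\ref{lem:core} for $P=B{+}F$ with $\|(\Sigma_{B+F}+\lambda I)^{-1}\|\le 1/\lambda$ and Lemma~\ref{lem:decomp}, then Lemma~\ref{lem:lb} combined with the inverted identity $\delta^*_N=(\Sigma_N+\lambda I)(\theta^*_N-\theta^*_B)$. That identity bounds $\|\delta^*_F\|$ by $\|\theta^*_N-\theta^*_B\|$ up to the factor $(\|\Sigma_N\|+\lambda)/(\alpha_R\beta-\alpha_F)$. Your PSD remark $\|\Sigma_N+\lambda I\|=\|\Sigma_N\|+\lambda$ is a valid justification of the step the paper states directly.
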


\begin{proof}
By Lemma~\ref{lem:core} applied to $P = B{+}F$ and Lemma~\ref{lem:decomp}, we can show
\begin{align*}
\theta^*_{B+F} - \theta^*_B
&= (\Sigma_{B+F} + \lambda I)^{-1}\,\delta^*_{B+F} \\
&= \tfrac{\alpha_F}{\alpha_B + \alpha_F}\,(\Sigma_{B+F} + \lambda I)^{-1}\,\delta^*_F.
\end{align*}
Taking Euclidean norms on both sides, we have
\begin{align*}
\|\theta^*_{B+F} - \theta^*_B\|
&\leq \tfrac{\alpha_F}{\alpha_B + \alpha_F}\,\|(\Sigma_{B+F}+\lambda I)^{-1}\|\,\|\delta^*_F\| \\
&\leq \tfrac{\alpha_F}{\alpha_B + \alpha_F}\cdot \tfrac{1}{\lambda}\cdot \|\delta^*_F\| \\
&\leq \tfrac{\alpha_F}{\alpha_B + \alpha_F}\cdot \tfrac{1}{\lambda}\cdot \tfrac{\|\delta^*_N\|}{\alpha_R\beta - \alpha_F} \\
&\leq \tfrac{\alpha_F}{\alpha_B + \alpha_F}\cdot \tfrac{1}{\lambda}\cdot \tfrac{\|\Sigma_N\| + \lambda}{\alpha_R\beta - \alpha_F}\,\|\theta^*_N - \theta^*_B\| \\
&= A \cdot B \cdot C \cdot \|\theta^*_N - \theta^*_B\|,
\end{align*}
where $A$, $B$, $C$ are the factors shown in the theorem statement.
The first inequality follows from submultiplicativity of the spectral norm. The second inequality follows from the fact that $\|(\Sigma_{B+F}+\lambda I)^{-1}\| \leq 1/\lambda$ and $\Sigma_{B+F} \succeq 0$. The third inequality follows from Lemma~\ref{lem:lb}, with $\beta > \alpha_F/\alpha_R$ ensuring positivity of the denominator. The fourth inequality follows from applying Lemma~\ref{lem:core} to $P = N$, which gives $\delta^*_N = (\Sigma_N + \lambda I)(\theta^*_N - \theta^*_B)$ and hence $\|\delta^*_N\| \leq (\|\Sigma_N\| + \lambda)\,\|\theta^*_N - \theta^*_B\|$.
\end{proof}

\section{Implementation Details}
\label{app:impl}

\paragraph{Architecture and training.} Both $A$ and $B$ are 2-hidden-layer MLPs with widths $(1024, 512)$ and ReLU activations, sharing one parameter set across the $K = 30$ ads (ad identity enters via one-hot concatenation). We use scikit-learn's \texttt{MLPClassifier}~\cite{sklearn_api} with Adam ~\cite{kingma2017adammethodstochasticoptimization}, learning rate $10^{-3}$, batch size $256$, early stopping on a $10\%$ validation split, and up to $500$ epochs.

\paragraph{Simulation scale.} Phase~0 draws 100{,}000 warm-up users. Phase~1 runs for 30 days at 100 users/day, yielding roughly 3{,}000 impressions per universe after auction filtering. Phase~2 evaluation uses 2{,}000 held-out users. The bid multiplier is $1.0$ (no bid shading). $A$ consumes $x_{1:20}$ and $B$ consumes $x_{1:50}$ of the 100-dimensional context.

\paragraph{Seeds and statistics.} Results aggregate $10$ independent seeds. Each seed controls the user draws, warm-up log noise, per-impression market-price noise, click realizations, the Phase-1 traffic split, and the held-out evaluation users; the ad catalog ($\boldsymbol{\phi}_a, \beta_a, v_a$) and MLP weight initialization are held fixed across seeds, so the reported variance reflects sampling stochasticity alone. Significance is reported via paired one-sided $t$-tests (seed-matched; the alternative hypothesis is that the candidate achieves lower KL than the baseline).

\end{document}